\documentclass{article}
\usepackage{arxiv}

\usepackage{outlines}
\usepackage[utf8]{inputenc} 
\usepackage[T1]{fontenc}
\usepackage{hyperref}
\usepackage{url}   
\usepackage{booktabs} 
\usepackage{amsfonts}
\usepackage{amsthm}
\usepackage{nicefrac}
\usepackage{microtype}
\usepackage{graphicx}
\usepackage{mathtools}

\usepackage[natbibapa]{apacite}
% For arxiv: 
%\usepackage{natbib} % Actually this didn't work well

\usepackage{doi}
\usepackage{makecell}

\usepackage{amsmath}
\usepackage[thinc]{esdiff}
\usepackage{todonotes}

\usepackage{glossaries}
\usepackage{svg}
\usepackage{tabularray}

\title{Predictive Coding as a Neuromorphic Alternative to Backpropagation: A Critical Evaluation
}

\date{}

\author{ 
    Umais Zahid\\
	Huawei Technologies R\&D\\
	London, UK\\
	\texttt{umais.zahid@huawei.com}
    \And
    Qinghai Guo\\
    Huawei Technologies R\&D\\
    Shenzhen, China\\
    \texttt{guoqinghai@huawei.com}
    \And
    Zafeirios Fountas \\
	Huawei Technologies R\&D\\
	London, UK\\
	\texttt{zafeirios.fountas@huawei.com}
}

\date{}

% Uncomment to override  the `A preprint' in the header
%\renewcommand{\headeright}{Submitted manuscript}

\hypersetup{
pdftitle={Predictive coding and backpropagation},
pdfsubject={q-bio.NC, q-bio.QM},
pdfauthor={Umais Zahid, Qinghai Guo, Zafeirios Fountas},
pdfkeywords={Backpropagation, Predictive Coding, Neuromorphic computing},
}

\begin{document}
\maketitle

\begin{abstract}
Backpropagation has rapidly become the workhorse credit assignment algorithm for modern deep learning methods. Recently, modified forms of predictive coding (PC), an algorithm with origins in computational neuroscience, have been shown to result in approximately or exactly equal parameter updates to those under backpropagation. Due to this connection, it has been suggested that PC can act as an alternative to backpropagation with desirable properties that may facilitate implementation in neuromorphic systems. Here, we explore these claims using the different contemporary PC variants proposed in the literature. We obtain time complexity bounds for these PC variants which we show are lower-bounded by backpropagation. We also present key properties of these variants that have implications for neurobiological plausibility and their interpretations, particularly from the perspective of standard PC as a variational Bayes algorithm for latent probabilistic models. Our findings shed new light on the connection between the two learning frameworks and suggest that, in its current forms, PC may have more limited potential as a direct replacement of backpropagation than previously envisioned.
\end{abstract}

\keywords{Backpropagation \and Predictive Coding \and Neuromorphic computing}

\glsaddall

\section{Introduction}

Predictive coding (PC) is a prominent neuroscientific theory that has emerged in the last two decades as a highly compelling computational model of perception and action in the brain \citep{rao_predictive_1999, friston_theory_2005}. From a theoretical standpoint, PC, in its standard formulation, has benefited from its interpretation as a variational Bayes algorithm for learning and inference in latent hierarchical models \citep{friston_learning_2003, friston_theory_2005, friston_predictive_2009, bogacz_tutorial_2017} providing it credence as a plausible instantiation of normative theories such as the Bayesian brain hypothesis, and the free-energy principle. More empirically, its candidate neurobiological implementation has demonstrated impressively close correspondences to the canonical circuitry of the cortex \citep{bastos_canonical_2012, shipp_neural_2016}, while also successfully explaining or reproducing a number of neurophysiological and cognitive phenomena, such as end-stopping \citep{rao_predictive_1999}, binocular rivalry \citep{hohwy_predictive_2008}, attention \citep{feldman_attention_2010, kanai_cerebral_2015}, and biases in the perception of event duration \citep{fountas_predictive_2022}.

More recent work has suggested a connection between modified forms of PC, and backpropagation, suggesting the former as an alternative to the latter with the various benefits one may naturally expect to associate for a theory formulated with neurobiological constraints in mind\citep{millidgePredictiveCodingFuture2022, millidge_activation_2020, song_can_2020}. These include locality of computation and parallelisability that may lend it improved performance, and greater amenability to implementation on neuromorphic hardware.  

The purpose of this work is to investigate these suggestions and clarify more precisely the relationship between backpropagation and PC, particularly with regards to the kinds of assumptions that are necessary to enable such a comparison, and furthermore, to define exactly what advantage PC-based formulations may have with respect to both parallelisability, locality and ultimately, practical performance. In what follows, we re-review some of the ground covered by \citep{rosenbaum_relationship_2022}, while extending it to discuss the implications for PC in general with respect to computational and memory complexity, as well as compute and memory locality. We also present proofs for lower bounds on the time complexity of current PC variants relative to backpropagation, and validate these proofs empirically. Our work finds that, while the examined algorithms present interesting, biologically-inspired alternative implementations of backpropagation, they are provably guaranteed to be slower under identical problem settings, and that it is unclear to what degree they improve upon locality, particularly in comparison to modern candidate proposals for backpropagation. Lastly, we note that when adopting the modifications required to enable such comparisons, PC models diverge from the generative variational Bayes framework, frequently used as the motivational basis of this theory.

\section{Backpropagation}
\label{sec:backpropagation}

Backpropagation, first introduced by \citep{linnainmaa_taylor_1976} in the context of accumulating rounding errors, and later popularised by \citep{rumelhart_learning_1986} in the context of neural networks, has quickly become the workhorse credit assignment algorithm for modern deep learning tasks. Described most simply, backpropagation can be seen as an algorithmically efficient implementation of the chain rule from calculus that allows the computation of high dimensional gradients for deeply nested functions with respect to scalar or low-dimensional outputs. It accomplishes this by defining gradients of values deep inside a nested function in terms of the gradients of their children (their functional dependants). The resultant recursive relationship is then evaluated for values closest to the output, sequentially backward. This recursive relationship is what enables backpropagation to efficiently compute gradients with respect to all parameters in the function chain with a single forward and backward computational pass. \\

To illustrate how this operates, we present the following simple but highly generic problem setting. Let us consider an arbitrary function $F$, defined by the composition of many constituent functions $f_i$, each optionally parameterised by some set of parameters $\mathbf{\theta_i}$. We may then write the following: 
\begin{align}
\label{eq:forward}
   f_0(x_0, \theta_0) &= \mu_1 \nonumber \\
   f_1(\mu_1, \theta_1) &= \mu_2 \nonumber \\
   ... \\
   f_{(L-1)}(\mu_{L-1}, \theta_{L-1}) &= \mu_{L} \nonumber \\
   f_{L}(\mu_L, x_L) &= E \nonumber 
\end{align}
where we use $\mu_i$ to denote the intermediary outputs of our composite function, and $x_0, x_L$ to denote known values which are provided as inputs and outputs of our function respectively. $x_0$ and $x_L$ may for example be inputs and classification labels respectively, $\mu_\ell$ may be intermediate activation vectors for a multilayer perceptron, and $f_L$ a classification loss. 

If we wish to compute the gradient of any intermediate parameters with respect to our function output $E$, we may then do so by first defining the following simple recursive relationship. 
\begin{align}
\label{eq:recursive_backprop_equations}
e_\ell = \left.
\begin{cases}
    \frac{\partial f_{\ell}}{\partial \mu_{\ell}}^T
    e_{\ell+1}
    & \text{for } \ell=1,\dots,L-1 \\[5pt]  
    \frac{\partial f_L}{\partial \mu_L}
    & \text{for } \ell=L 
\end{cases}
\right\}
\end{align}

For multivariate elementary functions, the $\frac{\partial f_{\ell}}{\partial \mu_L}$ terms would correspond to the Jacobian matrices for the elementary functions $f_i$ with respect to $\mathbf{x_i}$. And by virtue of the chain rule, the error terms $e_{\ell}$ would thus equal the gradient of E with respect to $x_{\ell+1}$. 

Having established error vectors $e_{\ell}$ that correspond to the gradient with respect to $x_\ell$, one may then compute the gradient of output E with respect to an arbitrary intermediate parameter by matrix-multiplying the error vector from the next layer by the Jacobian of the function with respect to the parameters.
\begin{align}
\label{eq:parameter_backprop_equation}
    \frac{\partial E}{\partial \mathbf{\theta_{\ell}}} = -\frac{\partial \mu_{\ell+1}}{\mathbf{\theta_{\ell}}}^T e_{\ell+1} 
\end{align}

Though obvious in this scenario, it will be important for our later comparisons to mention here that all Jacobians mentioned above are evaluated at the values of the feed-forward outputs of our function, which, as we will discuss, may not necessarily be the case for Bayesian PC. We also note that, in practice, the Jacobians specified in Equations (\ref{eq:parameter_backprop_equation}) and (\ref{eq:recursive_backprop_equations}) are almost never instantiated explicitly. Rather, an automatic differentiation program that implements backpropagation would associate each elementary or constituent function with an associated vector-Jacobian product (VJP) function, which is generally far more computationally and memory efficient \citep{paszke_automatic_2017, bradbury_jax_2018}.  

\subsection{Computational and Time Complexity}

The cost of computing the forward pass (Equation \ref{eq:forward} and subsequent backward recursions  (Equations \ref{eq:recursive_backprop_equations} and \ref{eq:parameter_backprop_equation}) can be shown to have computational and time cost bounded by a small constant multiple of the cost for a single forward pass through the corresponding elementary function. This well-known result from the automatic differentiation literature is sometimes called the cheap-gradient principle and is one reason why backpropagation has been so successful as a credit assignment algorithm in modern large data settings. This constant was shown to be 3 for rational functions in the seminal work of \citep{baur_complexity_1983}, and 5 more generally for any function composed of elementary arithmetic and trigonometric functions \citep{griewank_automatic_1997, griewank_evaluating_2008, griewank_complexity_2009}. The precise value of this constant generally also depends upon details regarding the relative costs of various basic mathematical operations (such as addition, multiplication, memory access, and non-linear operations) on the specific hardware being considered. For modern deep neural network operations and associated hardware, this constant is generally taken to be 3, \citep{kaplan_scaling_2020, hoffmann_training_2022}, which we adopt for our comparison.

As we will see, the basic unit of computation for both the current implementations of PC and backpropagation algorithms are the same: the VJP. The cost of computing this VJP is bounded in terms of a forward evaluation via the cheap gradient principle. To levy a comparison between algorithms we may therefore report computational complexity with respect to the cost of computing a single forward pass, which we denote with $\mathcal{C}_i$ and $\mathcal{C}_F$, for elementary constituent functions $f_i$, and the overall composite function $F$ respectively. 

Following the notation in \citep{griewank_evaluating_2008} we denote a computational task associated with a function $F$ as $task(F)$, with the corresponding computational and time complexity referred to as $\text{WORK}\{task(F)\}$ and $\text{TIME} \{task(F)\}$. For simplicity, we assume scalar complexity measures for work and time, and relate the time complexity to work complexity for a single elementary function $f_i$ using some positive valued constant $w$, such that $\text{TIME} \{task(f_i)\} = w \text{WORK} \{task(f_i)\} $, note that this relationship may not be true for other high-level tasks on $F$ such as PC inference due to parallel computation, which we will discuss and accommodate for in subsequent sections. Using the cheap gradient principle we can therefore write the following factorisation and bound for the time complexity of backpropagation. 

\begin{align}
    \text{TIME} \{\text{\textit{backprop}}(F)\} &= \text{TIME} \{\text{\textit{forward}}(F)\} + \text{TIME} \{\text{\textit{backward}} (F) \} \\
    &= \sum_i \text{TIME} \{\text{\textit{forward}}(f_i) \} + \sum_i \text{TIME} \{\text{\textit{vjp}}(f_i) \} \\
    &\le w \sum_i \mathcal{C}_i + 2 \mathcal{C}_i =  3 w \sum_i \mathcal{C}_i
\end{align}

\section{Predictive Coding}

\subsection{Variational PC}
We begin by describing the standard formulation of PC as a variational Bayes algorithm, \citep{friston_learning_2003, friston_theory_2005, bogacz_tutorial_2017, buckley_free_2017}. Within this context, the PC algorithm can be interpreted as a method for performing inference (over latent states) and learning (over parameters) for a latent hierarchical generative model. In keeping with much of recent work that renders a comparison with backpropagation we will restrict our focus to PC for static (i.e. non-dynamical/time-series-based) observations and states. We will distinguish this formulation with recent modified formulations such as FPA-PC \citep{millidge_predictive_2020} and Z-IL \citep{song_can_2020, salvatori_predictive_2021} by calling it variational PC (VPC) to emphasise that modified formulations may not necessarily correspond to a variational Bayesian inference and learning algorithm. 

For the sake of enabling our subsequent comparisons of PC with backpropagation, we will reuse the elementary functions $f_i$ first defined in Section \ref{sec:backpropagation} in this section for defining the conditional relationships within our hierarchical model. 

In particular, we will take the outputs of our constituent functions $f_i$ to correspond to means of Gaussian latent random variables, with each Gaussian latent random variable conditioning on its parents. Then, when $f_L$ is a loss function corresponding to the log-likelihood of a particular choice of output distribution $P_L$, (a common problem setting), this results in the following log-joint probability for our probabilistic graphical model:
\begin{align}
\label{eq:log_joint}
    \log p(x_0, ..., x_{L-1}, x_L|\theta_0, ... \theta_L) = \log p(x_L|x_{L-1}, \theta_{L-1}) + ... + \log p(x_1|x_0, \theta_0)
\end{align}
with 
\begin{align}
x_\ell | x_{\ell-1} \sim \left.
\begin{cases}
    \mathcal{N}(f_{\ell-1}(x_{\ell-1}, \theta_{\ell-1}), \Sigma_\ell))
    & \text{for } \ell=1,...,L-1 \\
    \
    P_L(f(x_{L-1, t}, \theta_L))
    & \text{for } \ell=L
\end{cases}
\right\}
\end{align}

Given this model, PC answers the question of how one can learn the parameters $\mathbf{\theta}$ that maximise their model evidence, $\log p(\mathbf{x}_{\text{OBS}}^{(1)}, ... \mathbf{x}_{\text{OBS}}^{(D)}|\boldsymbol{\theta})$, where notational simplicity we have combined all observations ($x_0$ and $x_L$), latent states ($x_1,...,x_{L-1}$), and parameters ($\theta_0,...,\theta_L$) into the vectors $\mathbf{x}_{\text{OBS}}$, $\textbf{x}_{\text{LAT}}$, $\boldsymbol{\theta}$ respectively, and superscript indices denote samples from a data generating (observation) distribution $\mathcal{X}$. 

The standard difficulty with this optimisation of the model parameters to maximise the model evidence rests upon the intractable marginalisation over latent states ($\textbf{x}_{\text{LAT}}$). The variational Bayes solution to this issue rests upon the optimisation of an evidence lower bound (ELBO), equivalently often called the (negative) free energy. This bound relies upon a tractable form for an approximate posterior density over latent states
\begin{align}
   \text{ELBO} = E_{q(\mathbf{x}_{\text{LAT}})}\left[\log p (\mathbf{x}_{\text{OBS}}, \mathbf{x}_{\text{LAT}}|\boldsymbol{\theta})\right] - E_{q(\mathbf{x}_{\text{LAT}})}\left[\log q(\mathbf{x}_{\text{LAT}})\right] 
\end{align}
    
For the approximate posterior density $q(\textbf{x}_{\text{LAT}})$, PC adopts a point-mass (Dirac $\delta$) distribution, either implicitly \citep{buckley_free_2017, bogacz_tutorial_2017, millidge_predictive_2020}, or explicitly \citep{friston_learning_2003, friston_theory_2005}. By denoting the centre of this Dirac $\delta$ delta distribution with $\boldsymbol{\phi}$, the above bound simplifies to the objective
\begin{align}
    \text{ELBO}_{\mathbf{PC}}(\boldsymbol{\phi}, \boldsymbol{\theta}) = \log p (\mathbf{x}_{\text{OBS}}, \boldsymbol{\phi}|\boldsymbol{\theta})
\end{align}

PC optimises this function by first enacting an ascent with respect to the the Dirac $\delta$ parameters $\phi$ corresponding to the modes of the approximate posterior over latent states. Once the maxima for the log joint probability with respect to $\phi$ is obtained, we then update our parameters $\theta$ by computing the gradient with respect to the log joint evaluated at these values of $\phi$.

Depending on one's perspective, this initial step can be seen, variously as: obtaining an MAP (maximum a posteriori) estimate over latent states (due to the maximisation of the log-joint), or as an expectation step within an Expectation-Maximisation scheme \citep{friston_theory_2005}, or as a variational bound tightening step from a variational Bayes perspective. The subsequent maximisation step (optimisation of $\theta$) can then be implemented via a mini-batch or stochastic gradient descent (SGD) procedure allowing one to tractably optimise against a large dataset of observations. The final algorithm can be summarised succinctly as follows:
\begin{enumerate}
    \item Define a (possibly hierarchical) graphical model over latent states ($\mathbf{x}_{\text{LAT}}$) and observations ($\mathbf{x}_{\text{OBS}}$) with parameters $\boldsymbol{\theta}$ \\ (i.e. $\log p (\mathbf{x}_{\text{OBS}}, \mathbf{x}_{\text{LAT}}|\boldsymbol{\theta})$)
    \item For minibatch $\mathbf{x}_{\text{OBS}} \sim \mathcal{D}$, where $\mathcal{D}$ is the data-generating distribution
    \begin{description}
        \item[Inference: ] Obtain MAP estimates ($\mathbf{x}_{\text{MAP}}$) for the latent states by enacting a gradient descent on $-\log p (\mathbf{x}_{\text{OBS}}, \mathbf{x}_{\text{LAT}}|\boldsymbol{\theta})$ 
        \item[Learning: ] Update the parameters $\boldsymbol{\theta}$ using SGD with respect to the negative log joint evaluated at the MAP estimates found at the end of inference: $-\log p (\mathbf{x}_{\text{OBS}}, \mathbf{x}_{\text{MAP}}|\boldsymbol{\theta})$
    \end{description}
\end{enumerate}

For the sake of subsequent discussion it will be useful to look at the exact functional form and computations occurring in  the gradient ascent (inference) procedure outlined in the above algorithm. First, since the conditional log-likelihoods of latent states in the log joint described by Equation (\ref{eq:log_joint}) are Gaussian, these, therefore, take the form of a series of squared precision-weighted prediction errors each corresponding to a constituent function $f_i$, plus the output log-likelihood
\begin{align}
\label{eq:elbo_pc}
    \text{ELBO}_{\text{PC}} &= \log p(x_0, x_1, ..., x_{L-1}, x_L) \span \\ 
    &= f_L(x_L, x_{L-1}) \\
    & \quad + ... \nonumber \\ 
    & \quad + (x_{\ell} - f_{\ell-1}(x_{\ell-1}))^T\Sigma^{-1}_\ell(x_{\ell} - f_{\ell-1}(x_{\ell-1})) \nonumber \\
    & \quad + ...  \nonumber \\
    & \quad + (x_1 - f_0(x_0))^T\Sigma^{-1}_1(x_1 - f_0(x_0)) 
\end{align}
where we have temporarily removed the dependence on parameters $\theta_\ell$ for brevity. 

The gradient of this objective with respect to a particular latent state $x_\ell$ is then simply the precision weighted prediction errors corresponding to predictions of $x_\ell$ from its parents as well as precision weighted prediction errors corresponding to predictions $x_\ell$ makes of its child nodes $x_{\ell+1}$. The gradient descent (inference) procedure can therefore be described by the following discretised gradient flow, assuming a Gaussian output log-likelihood for $f_L$ (i.e. Euclidean output loss): 
\begin{align}
    x_{\ell, t+1} 
    &= x_{\ell, t} - \gamma \left.\frac{\partial F}{\partial x_{\ell}}\right|_{x_{\ell, t}} \\ 
    &= x_{\ell, t} - \gamma \left[\Sigma^{-1}_{\ell}(x_{\ell, t} - f_{\ell-1}(x_{\ell-1,t})) - \left.\frac{\partial f_{\ell}}{\partial x_{\ell}}\right|_{x_{\ell, t}}^T \Sigma^{-1}_{\ell+1}(x_{\ell+1, t} - f(x_{\ell, t})) \right]
    \label{eq:bpc_dynamics}
    \intertext{where $\gamma$ is some inference step size. Under the assumption of identity variances, this equation simplifies to}
    &= x_{\ell, t} - \gamma \left[(x_{\ell, t} - f_{\ell-1}(x_{\ell-1,t})) - \left.\frac{\partial f_{\ell}}{\partial x_{\ell}}\right|_{x_{\ell, t}}^T (x_{\ell+1, t} - f(x_{\ell, t}))  \right]
    \label{eq:bpc_dynamics_2} \\
    \intertext{Written more generically in terms of errors, we can write this to include arbitrary definitions of output loss log-likelihood}
    &= x_{\ell, t} - \gamma \left[e_{\ell, t} - \left.\frac{\partial f_{\ell}}{\partial x_{\ell}}\right|_{x_{\ell, t}}^T e_{\ell+1, t}  \right] \\
    \intertext{with}
     e_{\ell, t} &= \left.
    \begin{cases}
    (x_{\ell, t} - f_{\ell-1}(x_{\ell-1,t}))
    & \text{for } \ell=1,...,L-1 \\
    \
    \frac{\partial f_L}{\partial f_{L-1}}^T
    & \text{for } \ell=L
    \end{cases}
\right\}
\end{align}
where we abuse notation and use $f_{L-1}$ to denote both the function, and its output evaluated at the current value of $x_{L-1}$, i.e. $f_{L-1}(x_{L-1, t})$. We do this to distinguish it from $\mu_{L}$, and the associated Jacobian in Equation (\ref{eq:recursive_backprop_equations}), which denotes the pre-loss output prediction computed using the feed-forward value $\mu_{L-1}$, i.e. $f_{L-1}(\mu_{L-1})$. 
That is to say, at every inference step, the latent states are updated such that they act to minimise the error corresponding to the current prediction of their latent states, and the error corresponding to the current prediction of their children's latent states. 

The model parameters are then updated (via SGD) with the derivative of the log joint evaluated at these converged MAP values. This gradient is the product of the error associated with the conditional distribution it parameterises and the Jacobian of that function with respect to $\theta$. Let $t_c$ be the time at which the variational modes have converged. Then, the gradient of the log joint with respect to the model parameters is given by
\begin{align}
    \label{eq:bpc_theta_updates}
    \frac{\partial F}{\partial \theta_{\ell}} = \frac{\partial f_{\ell}}{\partial \theta_{\ell}}^T e_{\ell+1, t_c}
\end{align}

Equations (\ref{eq:bpc_dynamics}-\ref{eq:bpc_dynamics_2}) describe the dynamics of the standard PC formulation. Under these dynamics, inference over latent states can be considered MAP inference on a corresponding probabilistic model, and learning can be considered as the maximisation of an ELBO with respect to its model parameters. In particular, this bound corresponds to a point-mass variational distribution assumed over our latent states.  

\paragraph{Inverted Configurations: } This particular formulation of PC is inverted relative to the standard PC formulation \citep{rao_predictive_1999, friston_learning_2003, friston_hierarchical_2008, buckley_free_2017} present in cognitive science, and indeed most generative modelling schemes, in the sense that observations parameterise, and are thus hierarchically higher than, latents such as classification labels. Inverting PC in this way, to render a comparison with backpropagation, has non-trivial implications on neurobiological plausibility. One of the strengths of standard PC formulations has historically been in the notion that latents hierarchically higher in the cortex have a non-linearly mixing and modulatory effect ($f(x_\ell, \theta_\ell))$) that is congruent with what we know about cortical anatomy; i.e. that top-down backward connections are generally more bifurcating and modulatory, while bottom-up influences, or forward connections, are more driving ($-e_\ell$) \citep{friston_theory_2005, friston_hierarchical_2008, bastos_canonical_2012, shipp_neural_2016, markov_anatomy_2014}. 

It is interesting to note that supervised generative (i.e non-inverted) formulations of standard PC have thus far struggled to produce competitive or close to competitive results for complex classification tasks. For example, traditional generative PC models trained to classify and generate MNIST digits are consistently unable to surpass a test set accuracy of ~80\%, both from our experience and observed here \citep{kinghorn_preventing_2022}. The authors are not aware of any example of a traditional (generative) PC succeeding in obtaining competitive classification performance on standard machine learning tasks. Though the possible reasons for this are outside the scope of this paper, it is likely a consequence of either the Gaussian-based supervision signal generally used in existing attempts, or the limitations of the Dirac $\delta$ (point-mass) approximate posterior, for training generative models, as highlighted in \citep{zahid_curvature-sensitive_2023}.

\subsection{Memory locality, weight transport and weight symmetry}

\begin{figure}[h!]
\centering
\includegraphics[width=0.9\linewidth]{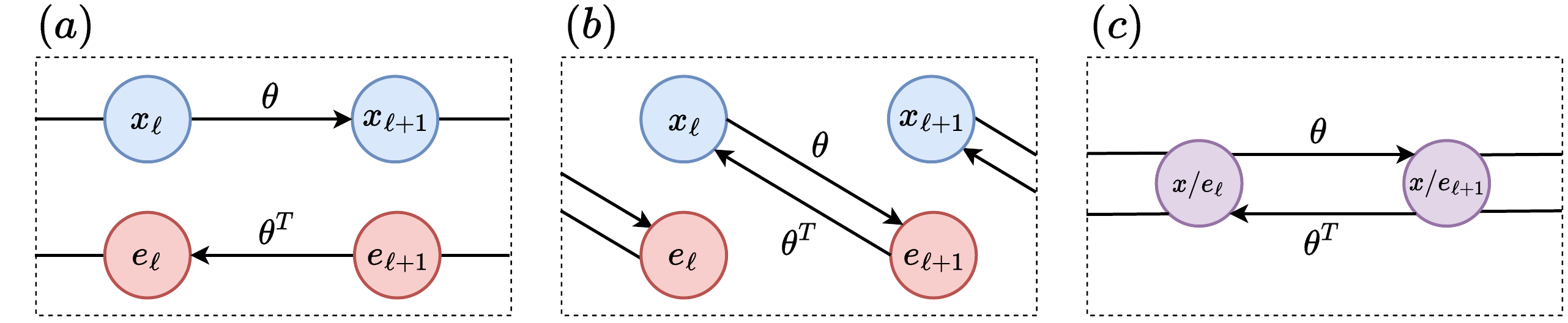}
\caption{(a) Weight transport problem in a standard proposal implementation for backpropagation. (b) Weight symmetry issue in standard PC formulations. Note we exclude additional connectivity necessary for PC, but not relevant to the weight transport problem, for the sake of clarity. (c) An example alternative proposal aimed at solving the weight transport issue for backpropagation \citep{amit_deep_2019}} 
\label{fig:weight_transport_vs_symmetry}
\end{figure}

Local computation within a neurobiological context generally refers to local plasticity, i.e. synaptic weight updates that are determined by the activity of the neurons they connect, that is to say pre-synaptic and post-synaptic neuronal activity, and synaptic locality. This notion of locality is often used in the context of assessing learning algorithms for their plausibility of implementation in the brain. Historically, a key criticism in the biological plausibility of backpropagation has been due to the presence of this type of non-locality, where it has been called the weight transport problem \citep{grossberg_competitive_1987, crick_recent_1989, zipser_neurobiological_1993}.

Under the problem setting described above Equations (\ref{eq:forward}), the weight transport problem can be summarised as the issue of a particular weight or parameter matrix (e.g. $\theta_\ell$) involved in the computation of feed-forward activation $\mu_{\ell+1}$, being transported for use in the computation of errors $e_\ell$. Specifically, if one considers a constituent function $f_\ell$ consisting of an affine transformation given by the weight matrix $\theta_\ell$ and a non-linearity $g(\cdot)$, such that
\begin{align}
    \label{eq:constituent_function_affine}
    \mu_{\ell+1} &= f_\ell(\mu_\ell, \theta_\ell) = g(\theta_\ell \mu_\ell) \\
    \intertext{The computation of $e_\ell$ is then}
    e_\ell &= \frac{\partial \mu_{\ell+1}}{\partial \mu_{\ell}}^T e_{\ell+1}
 = g'(\theta_\ell \mu_\ell) \theta_\ell^T e_{\ell+1} 
\end{align}
where we require reusing the forward weights due to the transpose weight term $\theta_\ell^T$. 

With respect to backpropagation there have been a number of well-known attempts at resolving the implausibility of the weight transport issue. These include the influential work \citep{lillicrap_random_2016}, where it was shown that computing the error via a fixed random matrix $B$ replacing $\theta_\ell$ could nonetheless facilitate learning without the requirement that the forward parameters be reused. However, the use of random feedback weights only remained performant for shallow networks, failing for deeper models. Subsequent and more recent work has demonstrated that training these feedback weights separately can improve performance for deeper models \citep{amit_deep_2019, akrout_deep_2019}. 

While these various models resolve the weight transport issue in one way or another, they frequently require or induce additional assumptions about the neurobiological machinery required for their implementation. The work of \citep{zipser_neurobiological_1993} for example required initialising feedforward and feedback matrices identically, which was criticised and resolved using weight decay by \citep{kolen_backpropagation_1994}. Both \citep{kolen_backpropagation_1994} and \citep{zipser_neurobiological_1993} however required transmitting the synaptic weight changes between separate networks, which has itself also been argued as implausible \citep{akrout_deep_2019}. \citep{amit_deep_2019} required strict and regimented sequencing of computation, with feed-forward neurons subsequently also acting as error neurons, which can be seen diagramatically in Figure \ref{fig:weight_transport_vs_symmetry}. 

Lastly, for the sake of further comparison it, both \citep{lillicrap_random_2016} and \citep{akrout_deep_2019} require interactions between the activations within forward and backward pathways, usually implemented as three-factor style learning rules, which induce a complexity beyond simple two-factor Hebbian learning rules. Though, we note, this may not be particularly problematic from the standpoint of neurobiological plausibility given a growing body of literature presenting empirical evidence, or mechanistic proposals for three-factor learning rules \citep{sjostrom_cooperative_2006, pawlak_timing_2010, urbanczik_learning_2014}.

A very similar but slightly different issue to weight transport exists within PC due to the presence of the Jacobian term ($\frac{\partial f_\ell}{\partial x_\ell}$) in Equations \ref{eq:bpc_dynamics} and \ref{eq:bpc_dynamics_2}. For the same definition of $f_\ell$ (affine + non-linearity), this results in the following inference equations for standard PC
\begin{align}
    x_{\ell, t+1} &= x_{\ell, t} - \gamma \left[e_{\ell, t} - \left.\frac{\partial f_{\ell}}{\partial x_{\ell}}\right|_{x_{\ell, t}}^T e_{\ell+1, t} \right] \\
    &= x_{\ell, t} - \gamma \left[e_{\ell, t} - g'(\theta_\ell \mu_\ell) \theta_\ell^T e_{\ell+1, t}  \right] 
    \intertext{with $\theta_\ell$ also being used in the computation of $e_{\ell+1}$}
    e_{\ell+1, t} &= x_{\ell+1, t} - g(\theta_\ell x_\ell)
\end{align}

Thus, the (negative) identical of the weights that mediate the influence of the subsequent error units ($e_{\ell+1}$) on the previous latent states ($x_\ell$), are also required for the reciprocal connection from latent states to error units. See Figure \ref{fig:weight_transport_vs_symmetry} for a depiction of this.  This property was identified in some of the earliest narratives on PC \citep{friston_theory_2005, friston_hierarchical_2008} albeit from the perspective of an advantage of PC's neurobiological plausibility due to the reported prevalence of reciprocal connections in the brain \citep{felleman_distributed_1991}. Prevalence of reciprocal connections is not sufficient however, as we, at least naively, require exact equivalence in strengths also. 

We distinguish this issue from that of weight transport by calling it the weight symmetry problem as it requires weight equivalence for forward and reciprocal synapses between the same two pairs of neurons, which is arguably a more addressable issue from the perspective of neurobiological plausibility over the required weight transport in backpropagation. The reason for this is that plasticity is generally considered to be some simple function of pre-synaptic and post-synaptic activity, given that both these activities are accessible for forward and reciprocal synapses it is not implausible that forward and backward weights could come into parity. In particular, given a simple 2-factor Hebbian learning rule, one would expect, at least for simple Hebbian plasticity, that both forward and reciprocal connections could experience the same weight modifications (the product of pre-synaptic and post-synaptic activity). Further incorporating weight decay, would result in weights that eventually synchronise, as demonstrated for the weight transport case by \citep{kolen_backpropagation_1994}. This approach, without decay, has indeed been demonstrated with minimal performance degradation on standard PC networks \citep{millidge_relaxing_2020}.  

We will next discuss a number of recent modified formulations of PC which have been presented as neurmorphic alternatives to backpropagation.

\subsection{Backpropagation as PC in the infinite variance limit}

\begin{figure}[h!]
\centering
\includegraphics[width=0.6\linewidth]{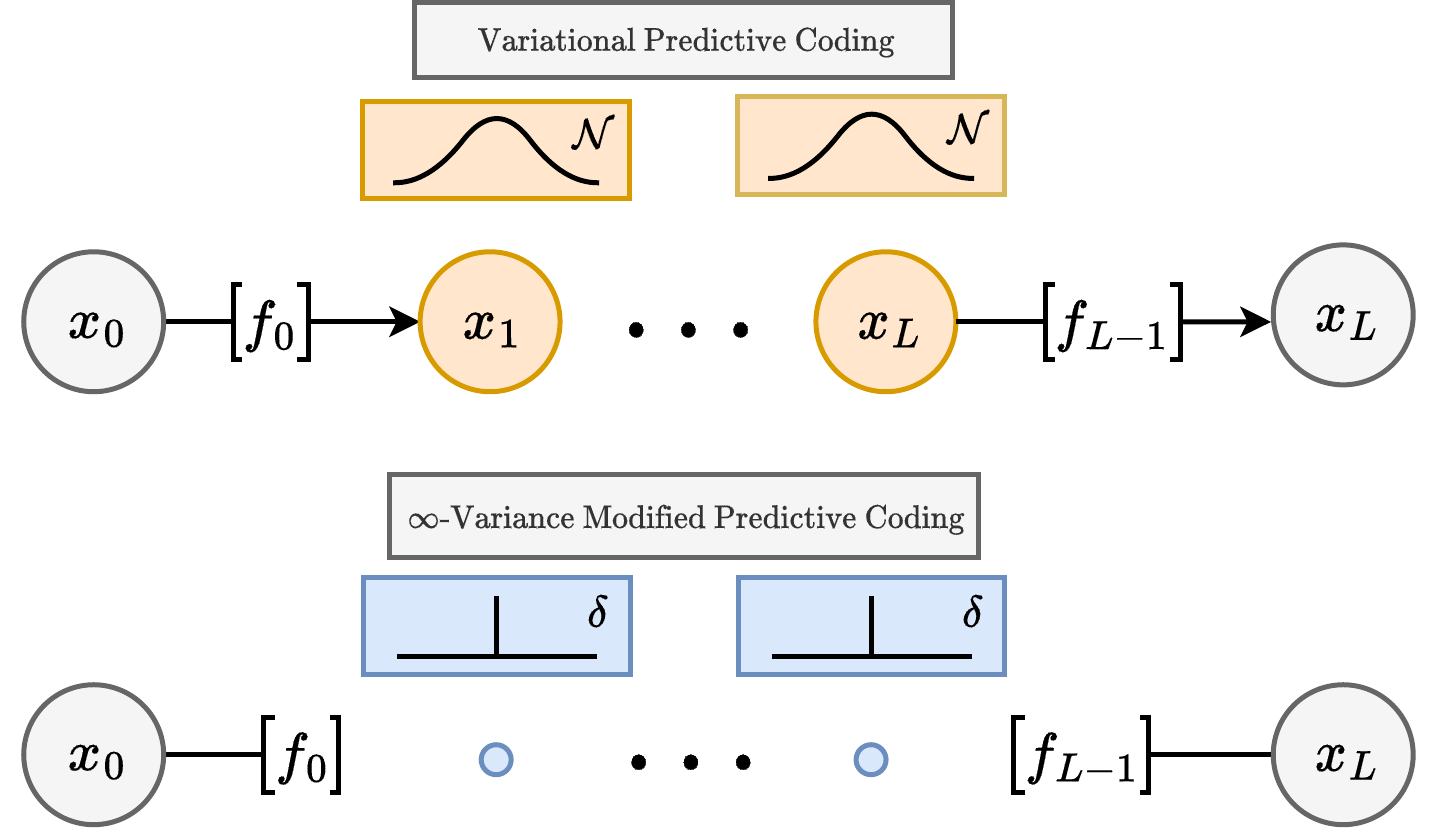}
\caption{Underlying probabilistic model assumed by standard, i.e. variational, PC (VPC) and recent modified forms of PC. Note VPC incorporates probabilistic latent states whereas FPA-PC assumes deterministic states and observations only. Grey circles correspond to observed variables, blue circles correspond to deterministic intermediate states, and orange circles correspond to probabilistic latent states.}
\label{fig:underlying_pgm}
\end{figure}

The link between PC and backpropagation was first presented in \citep{whittington_approximation_2017}, where it was shown that the resultant weight updates under the PC algorithm outlined above would equal that for the equivalent backpropagation graph given the following specific restrictive set of circumstances: 

\paragraph{Requirement 1} 
\begin{itemize}
    \item When the variational modes or latent states ($x_1, ..., x_{L-1}$) are initialised to the feed-forward values of the corresponding backpropagation-based computational chain ($\mu_1, ..., \mu_{L-1}$), \textit{and}: 
\end{itemize}

\paragraph{Requirement 2}
\begin{itemize}
    \item \textbf{Case 1} When the feed-forward prediction results in zero output loss (i.e. zero negative log-likelihood loss) 
    \item \textbf{Case 2} When the output variance ($\Sigma_L$) is set sufficiently higher than the remaining variances ($\Sigma_{\ell<L}$), such that the ratio of $\left(\frac{\Sigma_L}{\Sigma_{\ell}}, \forall \ell < L \right)$, goes to infinity \textit{and} the learning rate is scaled appropriately. 
\end{itemize}

Both these cases correspond to the same requirement, namely that the values of the latent states, $x_1,...,x_{L-1}$, remain equal, or close, to their feed-forward values at the end of the inference procedure. Said another way, this can be seen as the requirement that the MAP values for the latent states while observing both input observations (e.g. images) and output observations (e.g. classification labels), are equal to the MAP values when conditioning on input observations alone. Another more intuitive framework for understanding this requirement is to note that the inference and learning procedure that results under these set of circumstances is equivalent to doing inference and learning for a probabilistic graphical model in which the intermediate latent states have 0 conditional uncertainty, or variance, and are thus deterministic outputs of the input observations. 

Given this framework, learning devolves into enacting maximimum likelihood learning for a probabilistic graphical model in which inputs $x_0$ parameterise an output likelihood via the deep "total" network function $F = f_L \circ f_{L-1} ... \circ f_{1} \circ f_{0}$, with observations $x_L$. This is equivalent to enacting maximum likelihood training with backpropagation when the output loss function $f_L$ (see Equations \ref{eq:forward}) corresponds to a valid log-likelihood. See Figure \ref{fig:underlying_pgm} for a depiction of the probabilistic graphical model assumed by this approach versus standard VPC. 

To see this, observe that this is trivially true for \textbf{Case 1}, where the weight updates from the aforementioned deterministic latent maximum likelihood objective and the PC objective with probabilistic latent states are the same, namely zero, due to the zero error. 

More interestingly, this can also be seen for \textbf{Case 2}, if one considers that scaling up $\Sigma_L$ and rescaling all learning rates by its inverse, is equivalent to scaling \textit{down} the intermediate variances $\Sigma_{\ell}, \forall \ell < L$, while keeping the output variances fixed, and keeping the learning rate the same. This is arguably a more natural perspective to view this procedure as it allows us to understand what is happening to the underlying probabilistic graphical model that we are training as we do this: namely that we are removing uncertainty over latent states, and thus going from a variational Bayes (free energy minimising) learning procedure to a maximum likelihood learning procedure. To demonstrate this, we can look at what happens to the inference and learning equations as we conduct the \textbf{Case 2} scaling procedure from \citep{whittington_approximation_2017}: 

Consider that we have scaled the label variances $\Sigma_L$ by a large constant factor $k$, and subsequently scaled the parameter learning rate ($\alpha$) by $\frac{1}{k}$ to compensate, this results in the following modified inference and learning equations (previously Equations \ref{eq:bpc_dynamics} and \ref{eq:bpc_theta_updates}), again assuming a Gaussian output log likelihood (squared Euclidean output loss)
\begin{align}
    x_{\ell, t+1} = x_{\ell, t} &-
    \gamma
    \left.
    \begin{cases}
     \begin{multlined}
        \Sigma^{-1}_{\ell}(x_{\ell, t} - f_{\ell-1}(x_{\ell-1,t})) 
            \\ - \left.\frac{\partial f_{\ell}}{\partial x_{\ell}}\right|_{x_{\ell, t}}^T \Sigma^{-1}_{\ell+1}(x_{\ell+1} - f(x_{\ell, t}))
     \end{multlined} 
    & \text{for } \ell=1,...,L-2 \\
    \nonumber \\
    \
    \begin{multlined}
        \Sigma^{-1}_{L-1}(x_{L-1, t} - f_{L-2}(x_{L-2,t})) 
            \\ - \left.\frac{\partial f_{L-1}}{\partial x_{L-1}}\right|_{x_{L-1, t}}^T (\frac{1}{k} \cdot \Sigma^{-1}_{L})(x_{L} - f(x_{L-1, t})) 
    \end{multlined}
    & \text{for } \ell=L-1
    \end{cases}
\right\}
\end{align}
which may be rewritten as
\begin{align}
    x_{\ell, t+1} = x_{\ell, t} &-
    \gamma'
    \left.
    \begin{cases}
    \begin{multlined}
        (k \cdot \Sigma^{-1}_{\ell})(x_{\ell, t} - f_{\ell-1}(x_{\ell-1,t})) 
            \\ - \left.\frac{\partial f_{\ell}}{\partial x_{\ell}}\right|_{x_{\ell, t}}^T (k \cdot \Sigma^{-1}_{\ell+1})(x_{\ell+1} - f(x_{\ell, t})) 
    \end{multlined}
    & \text{for } \ell=1,...,L-2 \\
    \nonumber \\
    \
    \begin{multlined}
        (k \cdot \Sigma^{-1}_{L-1})(x_{L-1, t} - f_{L-2}(x_{L-2,t})) 
            \\ -  \left.\frac{\partial f_{L-1}}{\partial x_{L-1}}\right|_{x_{L-1, t}}^T (1 \cdot \Sigma^{-1}_{L})(x_{L} - f(x_{L-1, t}))
    \end{multlined}
    & \text{for } \ell=L-1
    \end{cases}
\right\}
\end{align}
where, for inference, $\frac{1}{k}$ factor has been absorbed by a new step size $\gamma'$ and, thus, given Euler integration does not diverge, will result in the same converged errors at the end of inference. 
The updates to our parameters $\theta_\ell$ are then
\begin{align}
\Delta \theta_\ell &= \frac{1}{k} \alpha 
\left.
    \begin{cases}
     \frac{\partial f_{\ell}}{\partial \theta_{\ell}}^T  \Sigma^{-1}_{\ell+1}(x_{\ell+1} - f(x_{\ell, t})) 
    & \text{for } \ell=1,...,L-2 \\
    \nonumber \\
    \
    \frac{\partial f_{\ell}}{\partial \theta_{\ell}}^T
    (\frac{1}{k} \cdot \Sigma^{-1}_{L})(x_{L} - f(x_{L-1, t}))
    & \text{for } \ell=L-1
    \end{cases} 
\right \} \\
\intertext{which may be rewritten as}
\Delta \theta_\ell &= \alpha 
\left.
    \begin{cases}
     \frac{\partial f_{\ell}}{\partial \theta_{\ell}}^T (\frac{1}{k} \cdot \Sigma^{-1}_{\ell+1})(x_{\ell+1} - f(x_{\ell, t})) 
    & \text{for } \ell=1,...,L-2 \\
    \nonumber \\
    \
    \frac{\partial f_{\ell}}{\partial \theta_{\ell}}^T
    (1 \cdot \Sigma^{-1}_{L})(x_{L} - f(x_{L-1, t}))
    & \text{for } \ell=L-1
    \end{cases}
\right \}
\end{align}

Therefore, the modification in \textbf{Case 2} corresponds to downscaling the variances of the intermediate latent states towards 0 while keeping the output likelihood variance fixed. Obtaining exact equivalence to the equivalent backpropagation-based learning update for this non-latent graph would require taking k to infinity, or more practically for getting approximate equivalence, some reasonably large value; \citep{whittington_approximation_2017} uses k=100 for a small network. But this scaling clearly runs the risk of being numerically unstable and is also biologically unrealistic.    

\subsection{The Fixed-Prediction Assumption}
\label{sec:fpa_pc}

An alternative to the scaling procedure mentioned above, dubbed the \textit{"fixed-prediction assumption"} (FPA) was presented in \citep{millidge_predictive_2020}, wherein the top-down predictions $f(x_\ell)$ and Jacobians $\left(\frac{\partial f_\ell}{\partial x_\ell}\right)$ corresponding to each latents were fixed to their feed-forward predictions $\mu_{\ell+1}$ and $\left(\frac{\partial f_\ell}{\partial x_\ell}|_{\mu_{\ell}}\right)$ respectively, throughout the inference procedure. 

The FPA modification of PC fixes the following terms in the discretised gradient flow to specific values corresponding to their feed-forward initialisation. That is to say
\begin{alignat}{2}
    f_{\ell-1}(x_{\ell-1, t}) &= f_{\ell-1}(x_{\ell, 0}) &&= f_{\ell-1}(\mu_{\ell-1}) \label{eq:fpa_fixing_first} \\[0.5em]
    f_{\ell}(x_{\ell}, t) &= f_{\ell}(x_{\ell, 0}) &&= f_{\ell}(\mu_{\ell}) \\[0.5em]
    \left.\frac{\partial f_{\ell}}{\partial x_{\ell}}\right|_{x_{\ell, t}} &= \left.\frac{\partial f_{\ell}}{\partial x_{\ell}}\right|_{x_{\ell, 0}} &&= \left.\frac{\partial f_{\ell}}{\partial x_{\ell}}\right|_{\mu_{\ell}} \label{eq:fpa_fixing_last}
\end{alignat}
where we have assumed \textbf{Requirement 1}, i.e. that the latent states are initialised at the feed-forward values of the corresponding computational chain. We refer to the inference updating equation corresponding to these FPA modifications as $u(x_{\ell, t}, x_{\ell+1, t})$, for a particular latent $x_{\ell}$, to show that it is dependant on the instantaneous value of only the $x_{\ell, t}$ and it's child variable $x_{\ell+1, t}$, and not the parent latent states $x_{\ell-1}$. This decoupling of the dynamics of each latent state from the values of its parents means that the trajectory of the latent states is no longer guaranteed to follow the gradient of the log-joint (or the free-energy as defined for standard PC (Equation \ref{eq:elbo_pc})), and therefore these dynamics are also not guaranteed to find the MAP estimate of the latent states for the model defined in standard VPC,
\begin{align}
\label{eq:fpa_pc_update_equations_1}
 x_{\ell, t+1} = x_{\ell, t} - \gamma \, u(x_{\ell, t}, x_{\ell+1, t})
\end{align}
with
\begin{align}
\label{eq:fpa_pc_update_equations_2}
 u(x_{\ell}, x_{\ell+1}) = \left.
    \begin{cases}
    (x_\ell - \mu_\ell) - \frac{\partial f_{\ell}}{\partial x_{\ell}}^T (x_{\ell+1} - \mu_{\ell+1})
    & \text{for } \ell=1,...,L-2 \\
    \
    (x_{L-1} - \mu_{L-1}) - \left.\frac{\partial f_{L-1}}{\partial x_{L-1}}\right|_{\mu_{L-1}}^T  \frac{\partial f_L}{\partial \mu_{L}}^T
    & \text{for } \ell=L-1
    \end{cases}
\right\}
\end{align}

It was suggested in \citep{millidge_predictive_2020} that the aforementioned fixed-prediction modifications (Equations \ref{eq:fpa_fixing_first} and \ref{eq:fpa_fixing_last}) were equivalent to the infinite variance limit of \textbf{Case 2} from \citep{whittington_approximation_2017}. This is due to the fact that in the limit of infinite output variance, the predictions of the latent states change very little from their feed-forward initialisations by the end of inference. We argue however that this equivalence does not hold due to the fact that in the infinite or high-variance limit of \textbf{Case 2} the latent states $x_\ell$ also remain fixed, or arbitrarily close, to their feed-forward values, and not just the predictions $f_\ell(x_\ell)$. This behaviour is necessary for the inference to accurately correspond to inference over the latent intermediate states $x_\ell$. Which, in the limit of the zero-variance (deterministic) intermediate latent states assumed by the equivalent backpropagation model, become equal to the feed-forward values $\mu_\ell$.

This is not the case for the FPA modification of PC, where the latent states are unconstrained and the prediction error is necessarily not zero except in the trivial case of a zero magnitude weight update. 

This suggests that FPA-PC inference corresponds neither to the probabilistic graphical model assumed by standard VPC, nor that of the scaled variance (i.e. deterministic latent states) modification from \textbf{Case 2} above. This leaves us with the question of how one can interpret FPA-PC inference if not in terms of inference over a probabilistic graphical model as with standard VPC. One particularly simple and elegant answer to this is that one can interpret the FPA-PC modified inference equations as directly implementing the recursive backpropagation equations via the steady state of a set of ordinary differential equations. FPA-PC inference can then be interpreted as enacting an Euler integration scheme until one reaches this steady state. We derive this perspective in the following section. 

\subsubsection{Deriving FPA-PC as a steady-state implementation of backpropagation}

We can make the relationship between FPA-PC and backpropagation even more clear by showing that the FPA-PC equations can be interpreted as a direct implementation of backpropagation via differential equations. To illustrate this, we work our way backwards from backpropagation, showing how a direct implementation of its recursion relationship results in the FPA-PC inference equations. 

The key quantities which backpropagation requires computing are the errors $e_i$ via the recursive expressions (Equations~\ref{eq:recursive_backprop_equations}). We can trivially convert these expressions such that they correspond to the steady state of a set of simple differential equations
\begin{align}
\dot{e}_\ell = \left.
- \begin{cases}
    e_{\ell} - \frac{\partial \mu_{\ell+1}}{\partial \mu_{\ell}}^T e_{\ell+1}
    & \text{for } \ell=1,\dots,L-1 \\
    e_{\ell} - \frac{\partial f_L}{\partial \mu_L}^T
    & \text{for } \ell=L
\end{cases}
\right\}
\end{align}
such that when $\dot{e}_\ell = 0$, we obtain the required recursive relationships (Equation \ref{eq:recursive_backprop_equations}). 
Note that $\frac{\partial \mu_{\ell+1}}{\partial \mu_{\ell}}$ and $\frac{\partial E}{\partial\mathbf{\mu_L}}$ are fixed Jacobians and require being evaluated at the feed-forward values $\mu_\ell$. Also note, the outer negative sign has been added to make the relationship with FPA-PC more clear, and this does not impact the value of $e_\ell$ at the steady state. 

We can then rewrite $e_\ell$ in terms of some variable $x_\ell$ minus the constant feed-forward values $\mu_\ell$ so that we have: $e_{\ell} = x_\ell - \mu_\ell$, allowing us to write the above dynamic equations in terms of a variable $x_\ell$ as
\begin{align}
\dot{x}_\ell = \left.
- \begin{cases}
    (x_\ell - \mu_\ell) - \frac{\partial \mu_{\ell+1}}{\partial \mu_{\ell}} ^T (x_{\ell+1} - \mu_{\ell+1})
    & \text{for } \ell=1,\dots,L-1 \\
    (x_\ell - \mu_\ell) - \frac{\partial f_L}{\partial \mu_L}^T
    & \text{for } \ell=L
\end{cases}
\right\}
\end{align}
If we instead assign $e_L$ outright, as done by FPA-PC, and keep the remaining errors dynamic, we obtain
\begin{align}
\dot{x}_\ell = \left.
- \begin{cases}
    (x_\ell - \mu_\ell) - \frac{\partial \mu_{\ell+1}}{\partial \mu_{\ell}} ^T (x_{\ell+1} - \mu_{\ell+1})
    & \text{for } \ell=1,\dots,L-2 \\
    (x_{L-1} - \mu_{L-1}) - \frac{\partial \mu_L}{\partial \mu_{L-1}} ^T\frac{\partial f_L}{\partial \mu_L}^T
    & \text{for } \ell=L-1
\end{cases}
\right\}
\end{align}
which are precisely equal to the update equations (\ref{eq:fpa_pc_update_equations_1}-\ref{eq:fpa_pc_update_equations_2}) of FPA-PC.

The practical consequences of this modified version of PC were first  questioned in \citep{rosenbaum_relationship_2022}, who showed that in case of step size equal to 1, FPA-PC is not just functionally but algorithmically equivalent to backpropagation, suggesting at least in the specific case of inference step size equal to 1, there may be no benefit of FPA-PC over backpropagation. We extend this work and investigate the properties of \citep{millidge_predictive_2020}, as well as other recent modifications \citep{song_can_2020, salvatori_predictive_2021} further, showing that these modifications have provable worse computational and time complexity for any step size. 

\subsubsection{Computational and Time Complexity}

To investigate the computational and time complexity of the FPA-PC and other modified PC algorithms we begin by proving that under the aforementioned dynamics, the number of inference steps taken for an error to propagate from an output node in a computational chain to an intermediate node is lower-bounded by its distance to the output node. See Theorem \ref{theorem:1} and Corollary \ref{corollary:1}. 

\newtheorem{theorem}{Theorem}
\newtheorem{lemma}[theorem]{Lemma}
\newtheorem{corollary}{Corollary}[theorem]
\begin{theorem}
\label{theorem:1}
For a particular node ($x_\ell$) in a computational chain that has non-zero gradient w.r.t an output loss $\mathcal{L}$, initialised to feed-forward values of the network, evolving under standard PC or FPA-PC dynamics, the error for that node ($e_\ell$) will first become non-zero at time $t = L - \ell$. Where L is the length of the chain, and $\ell$ is the index of that node within the chain.
\end{theorem}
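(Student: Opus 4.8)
The plan is to induct on the node index, descending from the output, and to follow a \emph{wavefront} of non-zero errors that moves one node backward per inference step. Using the feed-forward initialisation (Requirement 1), $x_{\ell,0}=\mu_\ell$, so every intermediate error vanishes initially, $e_{\ell,0}=x_{\ell,0}-f_{\ell-1}(x_{\ell-1,0})=\mu_\ell-\mu_\ell=0$, while the output term $e_{L,0}=\tfrac{\partial f_L}{\partial \mu_L}^{T}$ is the sole non-zero quantity at $t=0$; this is the base case $t=L-L=0$.

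The structural fact I would exploit is that the inference update $u_{\ell,t}$, in both Equation~\ref{eq:bpc_dynamics} and Equations~\ref{eq:fpa_pc_update_equations_1}--\ref{eq:fpa_pc_update_equations_2}, depends only on $e_{\ell,t}$, on the child error $e_{\ell+1,t}$, and on the Jacobian $\tfrac{\partial f_\ell}{\partial x_\ell}$ evaluated at $x_{\ell,t}$. Hence a latent stays frozen at its feed-forward value until its update first becomes non-zero, which cannot happen before its child error is non-zero. I would then prove by descending induction the joint claim that $e_\ell$ (and $x_\ell$) first becomes non-zero at $t=L-\ell$. For FPA-PC the error- and latent-onset coincide because $e_\ell=x_\ell-\mu_\ell$ by construction. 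For standard PC one observes additionally that the parent $x_{\ell-1}$ cannot start moving until \emph{its} own child error $e_\ell$ is non-zero, so through $t=L-\ell$ the prediction $f_{\ell-1}(x_{\ell-1,t})$ remains pinned at $\mu_\ell$ and therefore $e_{\ell,L-\ell}=x_{\ell,L-\ell}-\mu_\ell$, exactly as in the FPA case.

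The delicate point --- and the main obstacle --- is ruling out that the newly arriving error is annihilated by the transposed Jacobian, i.e.\ showing $e_{\ell,L-\ell}\neq 0$ rather than merely that something has changed. I would settle this by evaluating the first non-zero value explicitly. Since $x_\ell$ is still at $\mu_\ell$ at the step that first perturbs it and $e_{\ell,L-\ell-1}=0$, the update collapses to $e_{\ell,L-\ell}=\gamma\,\tfrac{\partial f_\ell}{\partial x_\ell}\big|_{\mu_\ell}^{T}\,e_{\ell+1,L-\ell-1}$, an identity that holds for both dynamics because at the wavefront the only displaced quantity entering it is the child error, which appears linearly. Telescoping down the chain from the fixed base $e_{L,0}=\tfrac{\partial f_L}{\partial \mu_L}^{T}$ yields $e_{\ell,L-\ell}=\gamma^{\,L-\ell}\,\tfrac{\partial f_\ell}{\partial x_\ell}\big|_{\mu_\ell}^{T}\cdots \tfrac{\partial f_{L-1}}{\partial x_{L-1}}\big|_{\mu_{L-1}}^{T}\tfrac{\partial f_L}{\partial \mu_L}^{T}$, which is precisely $\gamma^{L-\ell}$ times the backpropagation error of Equation~\ref{eq:recursive_backprop_equations}. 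This equals the loss gradient with respect to $x_\ell$, which the hypothesis assumes non-zero; since $\gamma>0$ we conclude $e_{\ell,L-\ell}\neq 0$, closing the induction.
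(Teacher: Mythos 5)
Your proposal is correct and follows the same inductive ``wavefront'' strategy as the paper's proof: feed-forward initialisation zeroes every intermediate error at $t=0$, each node's update depends only on its own error and its child's error, and therefore non-zero dynamics can propagate at most one node backward per inference step, reaching $x_\ell$ no earlier than $t=L-\ell$. Where you genuinely add something is at the step you call the delicate point. The paper's proof establishes only that each Jacobian $\frac{\partial f_\ell}{\partial x_\ell}$ is individually non-zero (its step 1) and then asserts (step 4) that a change in the child node necessarily yields a non-zero update for the parent; strictly speaking a non-zero transposed Jacobian can still annihilate a particular non-zero error vector, so the ``if and only if'' there is not fully justified as written. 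Your explicit evaluation of the wavefront value, telescoping to
$e_{\ell,L-\ell}=\gamma^{L-\ell}\,\frac{\partial f_\ell}{\partial x_\ell}^{T}\cdots\frac{\partial f_{L-1}}{\partial x_{L-1}}^{T}\frac{\partial f_L}{\partial \mu_L}^{T}$
(all Jacobians evaluated at the feed-forward values), identifies the first non-zero error with $\gamma^{L-\ell}$ times the backpropagation error of Equation~\ref{eq:recursive_backprop_equations}, i.e.\ the loss gradient with respect to $x_\ell$, which the theorem's hypothesis assumes non-zero; this closes the gap cleanly and, as a side effect, recovers the observation of \citet{rosenbaum_relationship_2022} that the $\gamma=1$ wavefront reproduces backpropagation exactly. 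Your separate handling of standard PC --- noting that the parent $x_{\ell-1}$ is itself frozen through $t=L-\ell$, so the prediction $f_{\ell-1}(x_{\ell-1,t})$ stays pinned at $\mu_\ell$ and both the error definition and the Jacobian evaluation point coincide with the FPA case at the wavefront --- is also a necessary detail that the paper leaves implicit. In short: same decomposition, but your version is the more rigorous one.
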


\begin{proof}
Let the variational modes $x$ be initialised to feed-forward values of the network, (i.e. \textbf{Requirement 1}). That is, let $x_\ell = \mu_\ell$ for $\ell \in [1,...,L-1]$. \\

Let Equations \ref{eq:fpa_pc_update_equations_1} and \ref{eq:fpa_pc_update_equations_2} describe the dynamics of our latent states under FPA-PC dynamics, and Equations \ref{eq:bpc_dynamics} and \ref{eq:bpc_dynamics_2} describe the dynamics under standard PC.

\begin{quote}
\begin{description}
\item[1] Because we are ignoring the trivial case of a zero gradient with respect to the output loss, we are guaranteed that the Jacobians $\frac{\partial f_\ell}{\partial x_\ell}$ are all non-zero as $\frac{\partial \mathcal{L}}{\partial x_{\ell}} = \frac{\partial f_\ell}{\partial x_\ell}...\frac{\partial f_{L-1}}{\partial x_{L-1}}\frac{\partial \mathcal{L}}{\partial f_{\ell-1}} \ne 0$.
\item[2] At time $t=0$, the dynamics $u(x_{\ell}, x_{\ell+1})$ of all nodes are $0$, except for $x_{L-1}$, by the definition of the update equations and \textbf{Requirement 1}. 
\item[3] At time $t=1$, the nodes $x_{L-1}$ therefore update proportionally to $u(x_L)$
\item[4] At any particular time step $t=t+1$, if the dynamics $u(x_{\ell}, x_{\ell+1})$ associated with a node $x_\ell$ were $0$ in time step $t$, then they will become non-zero at $t+1$, if and only if, a change has occurred in $x_{\ell+1}$ from time step $t$ to $t+1$. (By the definition of the FPA-PC (\ref{eq:fpa_pc_update_equations_1} and \ref{eq:fpa_pc_update_equations_2}) or standard PC (\ref{eq:bpc_dynamics} and \ref{eq:bpc_dynamics_2}) update equations, and statement 1)
\item[5] Thus, via induction, the first non-zero change to occur for an arbitrary variational mode $x_\ell$ at point $\ell$ in the computational chain will occur at a time $t = L - \ell$, i.e. its distance from the output with regards to the number of intermediary nodes.
\end{description}
\end{quote}
\end{proof}

\begin{corollary}
\label{corollary:1}
The convergence time for errors associated with an arbitrary node in a computational chain is lower-bounded by its distance, in nodes, to the output node. 
\end{corollary}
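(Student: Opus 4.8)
The plan is to derive the Corollary almost immediately from Theorem \ref{theorem:1}, using the elementary fact that an error cannot have \emph{converged} to a non-zero steady-state value while it is still identically zero. First I would fix an arbitrary node $x_\ell$ that has non-zero gradient with respect to the output loss, and recall from Theorem \ref{theorem:1} that its error $e_\ell$ remains exactly zero for every inference step $t < L - \ell$, becoming non-zero for the first time precisely at $t = L - \ell$.

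Next I would pin down the notion of convergence: the error $e_\ell$ has converged once it has reached (or settled arbitrarily close to) its steady-state value. The key observation is that this steady-state value is non-zero. This follows from the non-zero-gradient hypothesis together with the steady-state characterisation established earlier: at the fixed point of the FPA-PC (and standard PC) dynamics the errors satisfy the recursion of Equation \ref{eq:recursive_backprop_equations}, so $e_\ell$ coincides with $\partial \mathcal{L}/\partial x_\ell$, which is non-zero by assumption (statement 1 of the proof of Theorem \ref{theorem:1}). Hence the all-zero state that $e_\ell$ occupies for $t < L - \ell$ is \emph{not} its converged state.

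The argument then closes by direct comparison: since $e_\ell$ equals zero throughout $t < L - \ell$ while its converged value is non-zero, $e_\ell$ cannot have attained its steady-state value at any step strictly before $t = L - \ell$. Therefore the convergence time of $e_\ell$ is bounded below by $L - \ell$, which is exactly the distance in nodes from $x_\ell$ to the output node, proving the claim.

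The main obstacle, such as it is, lies not in the propagation argument (which Theorem \ref{theorem:1} already supplies) but in making the definition of convergence precise enough that ``has not yet become non-zero'' rigorously forces ``has not yet converged.'' I would handle this by noting that the target is non-zero and thus bounded away from zero, so any sensible convergence criterion — exact attainment of the fixed point, or entry into an $\epsilon$-ball around it for sufficiently small $\epsilon$ — is necessarily violated while $e_\ell$ is identically zero. I would also emphasise that only a \emph{lower} bound is claimed: in general $e_\ell$ needs further steps beyond $t = L - \ell$ to settle, since its downstream dependencies keep evolving, but establishing the stated bound requires nothing more than the first-nonzero-time result of Theorem \ref{theorem:1}.
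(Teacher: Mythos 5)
Your proposal is correct and follows essentially the same route as the paper's own proof: the fixed point of the dynamics equals the (non-zero) gradient with respect to the output loss, so the error cannot have converged while still zero, and Theorem \ref{theorem:1} gives the first non-zero time as $t = L - \ell$. Your additional care in pinning down what ``converged'' means (exact attainment or an $\epsilon$-ball around a target bounded away from zero) is a minor tightening of the same argument, not a different approach.
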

\begin{proof} 
~\begin{quote}
\begin{description}
\item[1] The fixed convergence point for the dynamical equations of FPA-PC is guaranteed to converge to an error equal to the gradient with regards to the output loss.  
\item[2] For a node, with non-zero gradient with regards to an output loss, to converge, the converged error for that node must therefore be non-zero.
\item[3] Under theorem 1, the time taken for the error to first become non-zero is equal to its distance to the output node $x_L$, which is $t = L - \ell$ for a computational chain of length $L$.
\end{description}
\end{quote}
\end{proof}

In words, this proof describes how, despite the ostensibly parallel computation occurring in FPA-PC, convergence nonetheless requires sufficient time steps for error information to propagate from the output node backward. This is necessary as information is still nonetheless only transmitted via adjacent nodes, as with backpropagation. Thus, inference must proceed for a minimum number of inference steps equal to this distance before convergence can occur. 

We can now ask what the time complexity of a single inference step is under FPA-PC dynamics, and thus the time complexity of inference overall for FPA-PC, as well as how this compares to backpropagation. Note that, for this comparison, we exclude the final VJP calculation for gradients associated with a particular $\theta$, for both FPA-PC and backpropagation, as this has the same computational cost for both algorithms and only occurs once, at the end of inference and error propagation respectively.   

When considering the time complexity of FPA-PC inference we will assume the error computations ($e_{\ell} = x_{\ell} - \mu_{\ell}$) contribute negligible latency to the computation at every inference step. This is a generally reasonable assumption for the intermediate errors, given that these errors can be computed entirely in parallel and thus have the latency of a single subtraction operation - which is also generally amongst the lowest latency arithmetic instructions available on modern hardware (See \citep{wong_demystifying_2010, fog_instruction_2011}). 
\begin{align}
    \text{TIME} \{\text{\textit{FPA-PC}}(F)\} &= \text{TIME} \{\text{\textit{forward}}(F)\} + \text{TIME} \{\text{\textit{FPA-Inf}} (F) \} \label{eq:fpa_time_complexity_first}
\end{align}
We assume each inference step is maximally parallel, such that VJP computations across all nodes occur simultaneously. However, since each inference step must wait for the slowest (VJP) computation to complete, we can take the minimum time complexity for each inference step to be equal to that of the slowest VJP computation: (Note, a similar result also follows if we assume the VJP computation for each function $f_i$ takes roughly the same amount of time)
\begin{align}
    \text{TIME} \{\text{\textit{FPA-PC}}(F)\} &= \sum_{i=0}^{L} \text{TIME} \{\text{\textit{forward}}(f_i) \} + \sum_{t=0}^{t_c} \text{TIME} \{\text{\textit{vjp}}(f_{\text{slow}}) \} \\
    &\ge \sum_{i=0}^{L} \text{TIME} \{\text{\textit{forward}}(f_i) \} + \sum_{t=0}^{L} \text{TIME} \{\text{\textit{vjp}}(f_{\text{slow}}) \} \\
    &\ge w \sum_{i=0}^{L} \mathcal{C}_i + w \sum_{i=0}^{L} 2 \mathcal{C}_{\text{max}} \\ 
    &\ge w \sum_{i=0}^{L} \mathcal{C}_i + w \sum_{i=0}^{L} 2 \mathcal{C}_i \\
    &\ge \text{TIME}\{\text{\textit{backprop}}(F)\}
    \label{eq:fpa_time_complexity_last}
\end{align}
where $C_{\text{max}} = \max \{C_i :  i = 1, ..., L\} = \text{WORK} \{\text{\textit{vjp}}(f_{\text{slow}}) \}$.
Thus, the time complexity for FPA-PC inference is provably greater than, or equal to, that of backpropagation for an equivalent computational chain. The bound we present here is a close to best-case scenario for FPA-PC for a number of reasons: 

\begin{enumerate}
    \item We have assumed error computation ($e_{\ell} = x_{\ell} - \mu_{\ell}$) contributes negligible latency to each inference step.
    \item Convergence will generally not occur within exactly $t_c = L$ steps except in particular circumstances (e.g. inference step size equal to 1), and so in practice, FPA-PC may have a significantly higher time complexity (scaling with inference length) than backpropagation. 
    \item Bottlenecks (slow VJP computations) will impact FPA inference significantly more as each inference step will be as slow as the slowest VJP computation (incurring a $t_c \cdot w \cdot C_{\text{max}}$ latency cost), while a single slow VJP computation would only incur a single $C_{\text{max}}$ latency cost for backpropagation.  
\end{enumerate}

Note that this is a separate and distinct proof to the result by \citep{rosenbaum_relationship_2022}, which showed that for a specific step size of 1, FPA-PC computes gradients in $t = L - \ell$ steps (i.e. in an equivalent number of steps to backpropagation). It was not clear however (1) whether FPA-PC could converge in fewer time steps for arbitrary step sizes and (2) whether FPA-PC inference had worse time complexity (or slower runtime), which was dependent on the time complexity of each inference step. Here we show that the FPA-PC convergence can provably never be faster than standard backpropagation for \emph{any} step size.

We test these results by enacting FPA-PC inference on multi-layer perceptron networks of various sizes, while varying the number of inference steps relative to the size of the network. We then compute the cosine similarity of gradient updates relative to those obtained via backpropagation. We find, as expected, that cosine similarity drops rapidly as the number of inference steps falls below the number of layers in the network, demonstrating a failure to converge (Figure \ref{fig:bp_cosine_similarity}). Inference learning rates lower than 1 show a failure to converge even for a number of inference steps higher than the number of layers in the network. We also report validation and test set accuracies, which experience significant drops as this occurs also (Figures \ref{fig:fpa_pc_val_accuracy} and \ref{fig:fpa_pc_test_accuracy}).

\begin{figure}
\centering
\includegraphics[width=\linewidth]{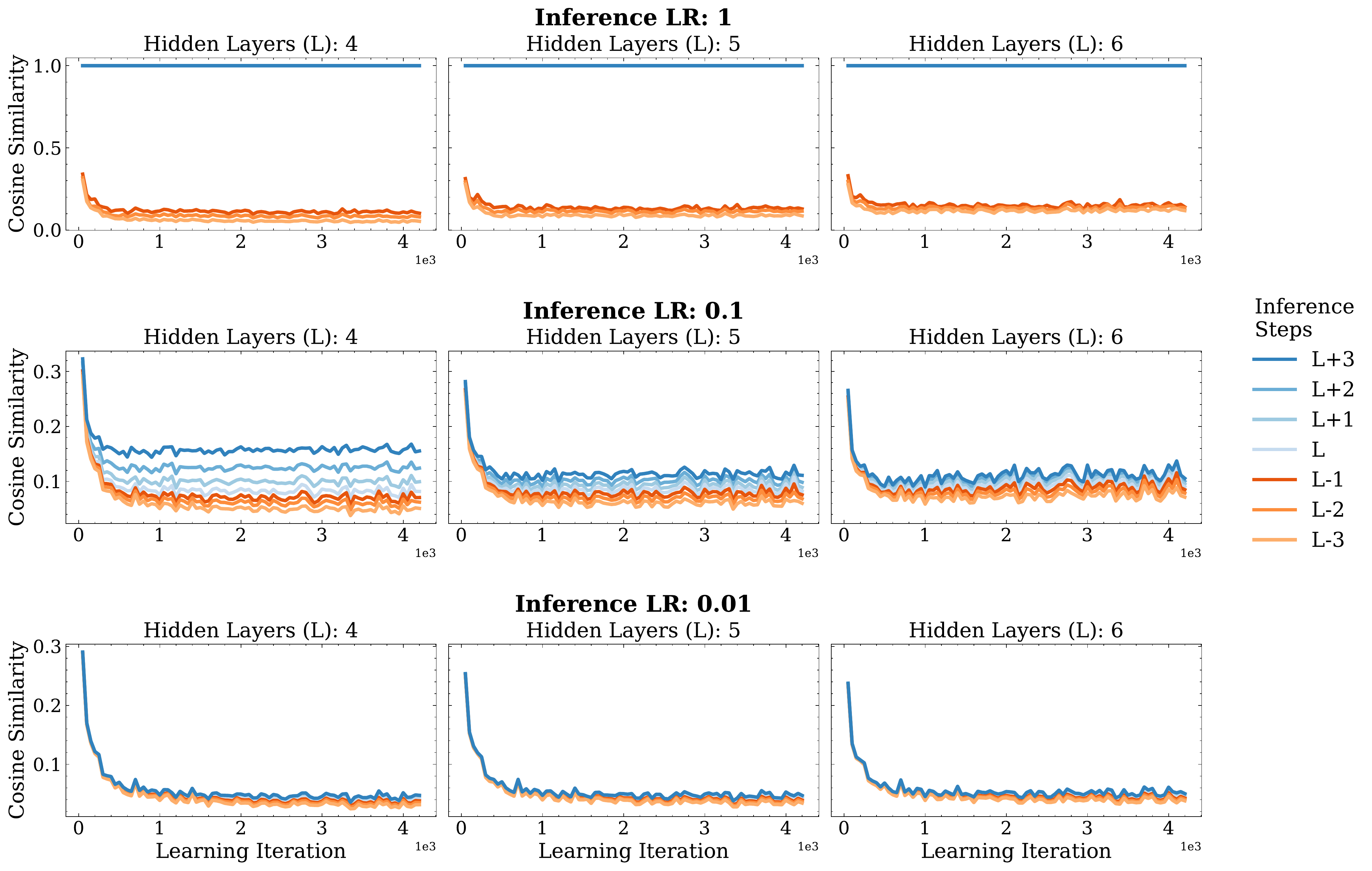}
\caption{Cosine similarity between FPA-PC learning updates and backpropagation on MNIST, tested for a varying number of inference steps, relative to the size of the network.}
\label{fig:bp_cosine_similarity}
\end{figure}

\begin{figure}
\centering
\includegraphics[width=\linewidth]{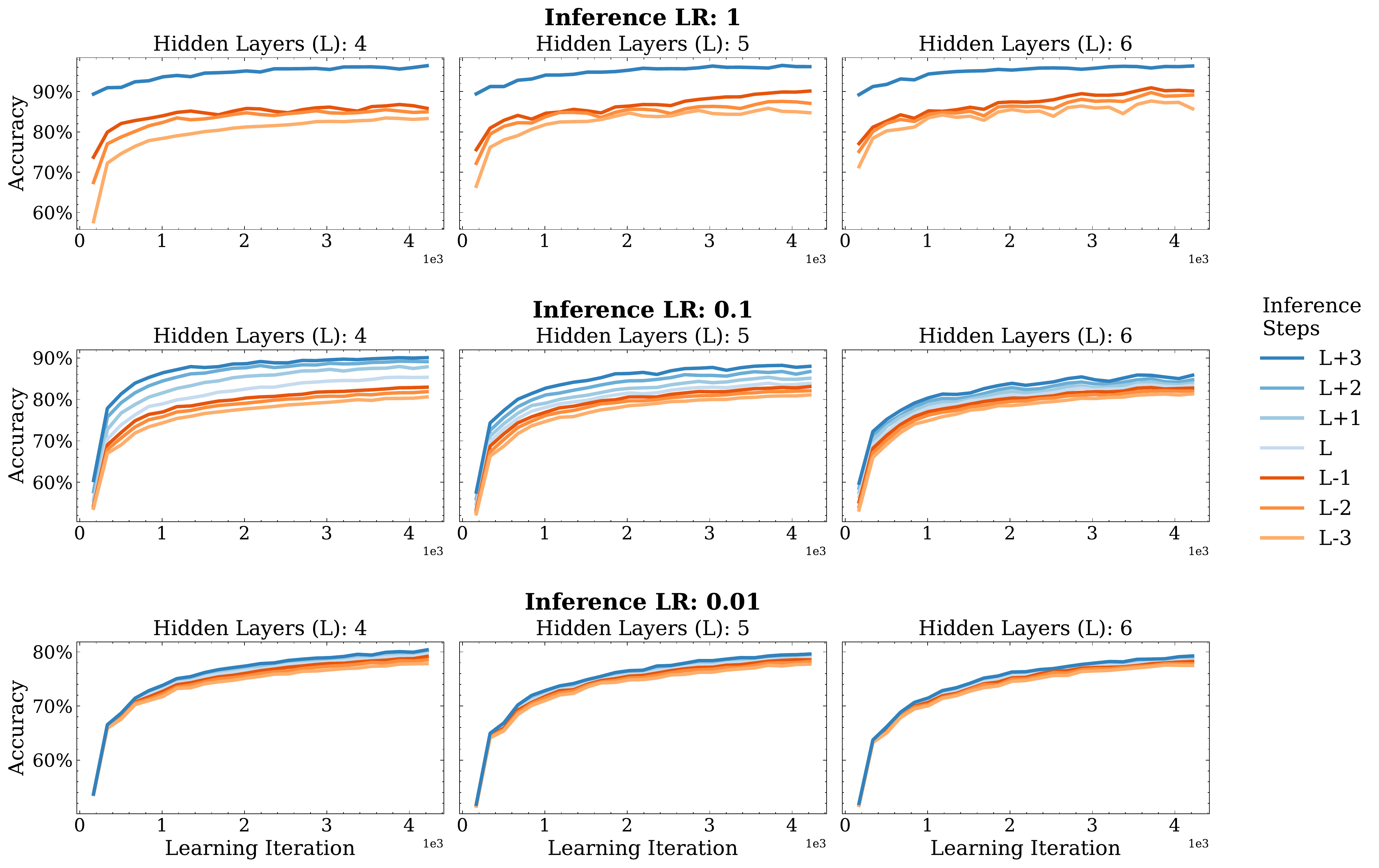}
\caption{Validation set accuracy on MNIST with FPA-PC for a varying number of inference steps, relative to the number of layers in the network.}
\label{fig:fpa_pc_val_accuracy}
\end{figure}

\begin{figure}
\centering
\includegraphics[width=\linewidth]{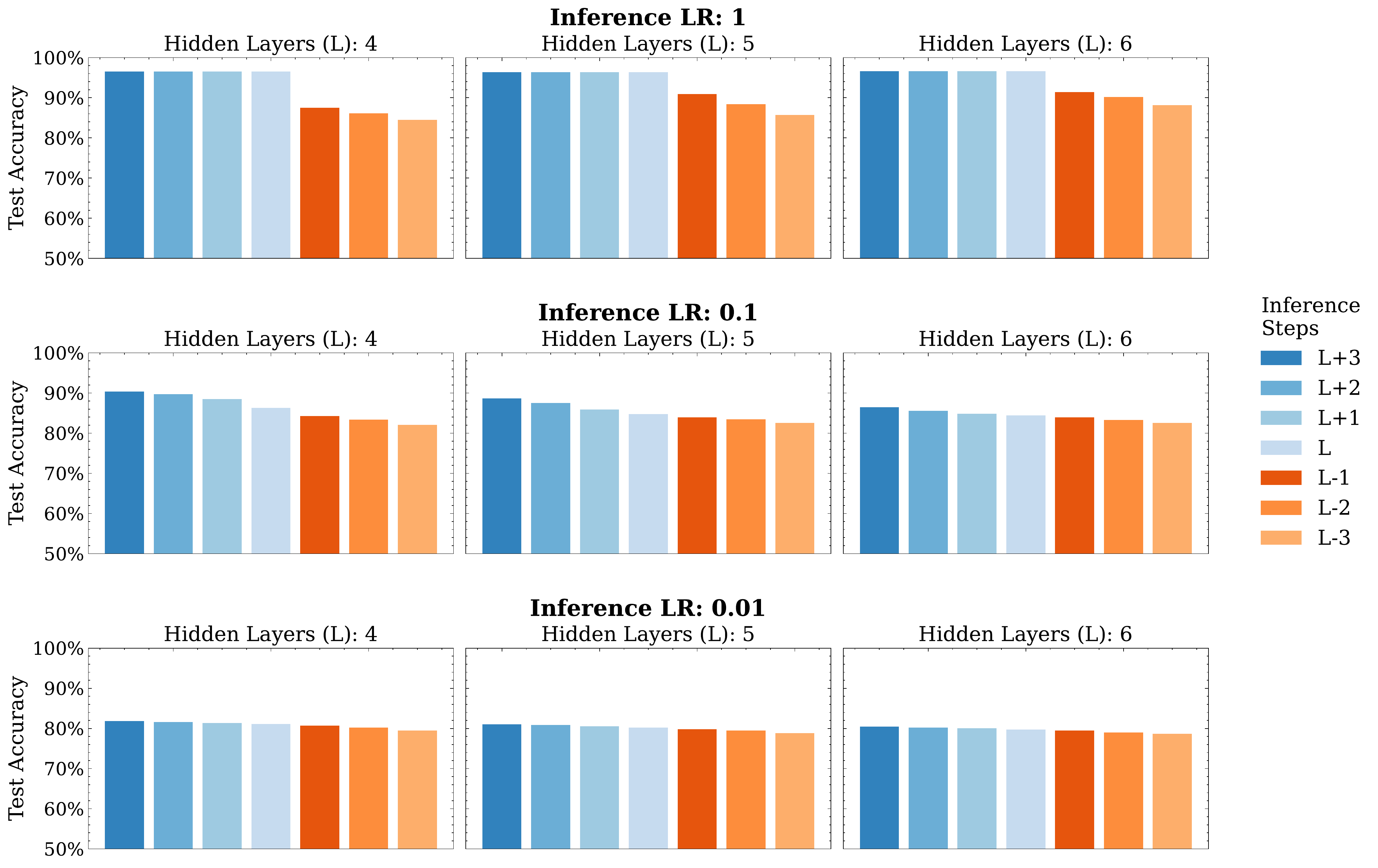}
\caption{Final test set accuracy on MNIST with FPA-PC for a varying number of inference steps, relative to the number of layers in the network.}
\label{fig:fpa_pc_test_accuracy}
\end{figure}

\newpage % To allow figures to be shown before the new section

\subsection{Zero Divergence Inference Learning (Z-IL)}

Subsequent to the FPA-modified formulation of PC, further modified formulations of PC were presented in \citep{song_can_2020, salvatori_predictive_2021}. The resultant algorithm, termed \textit{Zero Divergence Inference Learning} (Z-IL), achieved a similar effect to FPA-PC without explicitly fixing the feed-forward values and Jacobians and did so by requiring a very specific set of changes. Specifically, the following set of changes to the standard PC algorithm were required:  

\begin{itemize}
    \item \textbf{Requirement 1: } (As before): The variational modes or latent states ($x_1, ..., x_{L-1}$) are initialised to the feed-forward values of the corresponding backpropagation-based computational chain ($\mu_1, ..., \mu_{L-1}$)
    \item \textbf{Requirement 2: } The inference learning rate $\gamma$ is set to 1
    \item \textbf{Requirement 3: } A particular layer is updated specifically and exclusively at a particular inference time-step ($t = L-\ell$) corresponding to its distance from the output node in the computational chain.
\end{itemize}

Ablation of any one of these three requirements was shown to result in losing equivalence between Z-IL and backpropagation\footnote{Note that, for the purposes of this proof, we will ignore FA-Z-IL (Fully Autonomous Z-IL), which builds upon Z-IL, as its primary purpose was to remove the neurobiologically implausible requirement that the weight update is triggered manually at a particular inference step, and thus does not impact the results in this section.}. 

This algorithm relies on the fact that at a particular time-step, the node corresponding to a particular layer $x_\ell$ is experiencing an update that is equivalent to that obtained under FPA-PC dynamics. This occurs because the current value of any particular node ($x_\ell$) does not deviate from the feed-forward values they were initialised with until after a particular (i.e. $t=L-\ell$) inference step (See Lemma A.3 from Supplementary Material in \citep{song_can_2020}). For the node corresponding to the weight being updated, this has the effect of mimicking the fixing of feed-forward values seen under the FPA modified form (Equations \ref{eq:fpa_fixing_first}-\ref{eq:fpa_fixing_last}), since both its predictions and the predictions of its parents ($\ell-1$) remain fixed to their feed-forward values. 

To see this explicitly, we describe the relevant computations occurring at each inference step for the example function $F$, in table \ref{tab:z_il_v_backprop}. We restrict ourselves at each inference step to only describing the changes that occur to $x_\ell$, $e_\ell$ and $\theta_\ell$ for every timestep $\ell$. Therefore, we will not record in the table below any computations occurring at nodes $>\ell$,  as they will have no future influence on any parameter learning, as well as nodes $<\ell$, as no change is occurring for that inference step. For notational simplicity we drop time indices, rather, values on the right-hand-side of the equations in Table \ref{tab:z_il_v_backprop} refer to those from the previous time step. 

\begin{table}[h]
    \centering
    \begin{tblr}{|X[2,$, c]|X[5, $, c]|X[5, $, c]|}
    \hline
    \text{\textbf{Step (T)}} & \text{\textbf{Z-IL}} & \text{\textbf{Backpropagation}} \\
    \hline
    
    T=0 & 
    \begin{tblr}{@{}Q[$]@{}Q[$]@{}}
    \end{tblr}
    e_L = \frac{\partial f_L}{\partial \mu_L} &
    e_L = \frac{\partial f_L}{\partial \mu_L} \\
    \hline
    
    T=1 &
    \begin{tblr}{@{}Q[$]@{}Q[$]@{}}
    x_{L-1} & = \underbrace{x_{L-1}}_{=\mu_{L-1}} + \frac{\partial f_{L-1}}{\partial \mu_{L-1}}^T e_{L} \\
    e_{L-1} & = x_{L-1} - \mu_{L-1}
    \end{tblr}
    &
    e_{L-1} = \frac{\partial f_{L-1}}{\partial \mu_{L-1}}^T e_{L}
    \\
    \hline
    
    \vdots & \vdots & \vdots \\
    \hline
    
    T=L-1 
    &
    \begin{tblr}{@{}Q[$]@{}Q[$]@{}}
    x_{1} & = \underbrace{x_1}_{=\mu_{1}} + \frac{\partial f_{1}}{\partial \mu_{1}}^T e_{2} \\
    e_{1} & = x_{1} - \mu_{1}
    \end{tblr}
    &
    e_{1} = \frac{\partial f_{1}}{\partial \mu_{1}}^T e_{2}
    \\
    \hline
    
    \end{tblr}
% \caption{Caption}
\label{tab:z_il_v_backprop}
\end{table}

We can see here that in each time step, for the computations relevant to parameter gradient updates, Z-IL engages in precisely the same computations as backpropagation (a VJP evaluated at the feed-forward values and the subsequent error), these are then added to a constant (the feed-forward activations $\mu_\ell$), before this constant is then subtracted. In addition to these computations, we note that Z-IL (and its variant FA-Z-IL by extension), engages in wasted computation for nodes $x_i, i > \ell$, which we do not depict in the walk-through of the computations above. These computations occur despite not resulting in, or contributing to, any parameter updates, and are thus unnecessary. 

Since the dynamics for Z-IL are a special case of the dynamics of standard PC, we may once again use Theorem \ref{theorem:1}, Corollary \ref{corollary:1} and identical reasoning to that in section \ref{sec:fpa_pc}, to once again obtain complexity bounds for Z-IL. Which we find are similarly lower-bounded by that of backpropagation. 
\begin{align}
    \text{TIME} \{\text{\textit{Z-IL}}(F)\}
    &\ge \sum_{i=0}^{L} \text{TIME} \{\text{\textit{forward}}(f_i) \} + \sum_{t=0}^{L} \text{TIME} \{\text{\textit{vjp}}(f_{\text{slow}}) \} \\
    &\ge w \sum_{i=0}^{L} \mathcal{C}_i + w \sum_{i=0}^{L} 2 \mathcal{C}_{\text{max}} \\ 
    &\ge w \sum_{i=0}^{L} \mathcal{C}_i + w \sum_{i=0}^{L} 2 \mathcal{C}_i \\
    &\ge \text{TIME}\{\text{\textit{backprop}}(F)\}
    \label{eq:zil_time_complexity_last}
\end{align}

\subsection{Generalised-IL (G-IL), Learning Rate Stability and Online Learning}

For completeness, we note that some recent work has suggested that Predictive Coding (PC) and related energy-based models, which do not yield backpropagation-equivalent updates, may nonetheless exhibit greater robustness to high learning rates and show less degraded performance in the online learning (batch size 1) settings \citep{alonso_theoretical_2022, song_inferring_2022}. More specifically, \citep{song_inferring_2022} demonstrates that a variant of PC with additional regularisation terms, called Generalised-IL (G-IL), may approximate implicit SGD, with the approximation becoming exact under certain limits. The authors of this work further present a modified algorithm (IL-prox), which guarantees equal updates to \textit{implicit} SGD. The resultant schemes demonstrated greater robustness to the learning rate and less degraded performance in online learning regimes. These properties are also demonstrated empirically, for energy-based models (EBMs) in general, as a consequence of a principle dubbed "prospective configuration" in \citep{alonso_theoretical_2022}.  

These findings are intriguing, as robustness to learning rates aligns with expectations for a backward Euler integration scheme -- the implicit counterpart to the forward Euler integration scheme from which SGD originates -- and an optimization approach that is demonstrably less prone to divergence in non-stochastic contexts.

It is unclear, however, whether the differences in learning rate stability would remain when testing against optimizers such as Adam, or SGD with momentum, both of which are well-understood in theory, and practice, to ameliorate the risk of divergence under high learning rates, by improving the conditioning of the loss landscape or dampening oscillations. Moreover, there is evidence to suggest that small-batch SGD itself has a greater robustness to learning rate, \citep{masters_revisiting_2018, shallue_measuring_2019} possibly due to the additional stochasticity in the small-batch setting preventing the accumulation of Euler integration errors. As such, the authors posit that it would be a compelling avenue for future research to explore whether the performance gap in online learning settings persists when online SGD is combined with high learning rates.

While these findings do not impact our presented results or the use of PC as a direct substitute for backpropagation, they raise thought-provoking questions on whether modified and unmodified PC forms may nonetheless possess attributes desirable for neuromorphic learning, even if they do not necessarily exhibit parameter updates identical to those under backpropagation.

\section{Conclusion and Key Results}

In this paper, we have investigated the computational efficiency, learning dynamics, and neurobiological plausibility of various PC variants in comparison to the backpropagation algorithm. We now summarise the key results and points made:

\begin{quote}
\begin{description}
    \item[Result 1] The infinite-variance limit modification from \citep{whittington_approximation_2017} (see Case 2 above), is equivalent to assuming a model with strictly deterministic (non-probabilistic) latent states. 
    
    \item[Result 2] Modified variants of PC: FPA-PC \citep{millidge_predictive_2020}, and Z-IL \citep{song_can_2020, salvatori_predictive_2021}, do not follow a free energy gradient. Corollary: learning does not correspond to learning of a latent probabilistic model via variational Bayes. 
    
    \item[Result 3] Modified variants of PC: FPA-PC, and Z-IL, are guaranteed to have worse time complexity than backpropagation, even when accounting for fully parallel computation within each inference step. (See Theorem \ref{theorem:1}, corresponding Corollary \ref{corollary:1} and Equations \ref{eq:fpa_time_complexity_first}-\ref{eq:zil_time_complexity_last})

    \item[Point 1] Equivalence, or approximate equivalence, of PC with backpropagation mandates adopting an inverted scheme, unlike traditional formulations of generative PC, which has non-trivial implications for neurobiological plausibility. 
    
    \item[Point 2] A naive implementation of standard PC does not suffer from the weight \textit{transport} problem in the same sense as that which is present in naive neurobiological proposals of backpropagation. It does, however, suffer from an analogous but potentially less implausible, weight \textit{symmetry} issue. 
\end{description}    
\end{quote}

These results raise doubt with respect to the advantages that recent variants of PC may provide with regards to backpropagation and its neuromorphic implementation, given that PC variants which result in equivalent or close to equivalent gradient updates also engage in precisely the same computations as backpropagation, and do so in much the same, equally local/non-local, way. This is despite the ostensibly parallelised computation of PC networks, which from the perspective of backpropagating errors merely results in additional, or unused, computation, and not faster error propagation.      

Furthermore, by introducing modifications to obtain, or approximate, equivalence, PC variants lose the generative variational Bayes interpretation of standard formulations of PC, and its various strengths, such as maintaining uncertainty over latent hidden states or causes, or parity with our current understanding of the organisational and functional properties of pathways within the cortex.

\section*{Acknowledgements}
The authors would like to thank Prof. Christopher L. Buckley and his group for their valuable comments and feedback on this manuscript.

\bibliographystyle{apacite}

\bibliography{references}

\end{document}